\documentclass[10pt, conference]{IEEEtran}
\usepackage[utf8]{inputenc}
\usepackage{cite}
\usepackage{amsmath,amssymb,amsfonts, amsthm}
\usepackage{algorithmic}
\usepackage{graphicx}
\usepackage{textcomp}
\usepackage[caption=false]{subfig}
\usepackage[justification=centering]{caption}
\usepackage[ruled,vlined]{algorithm2e}
\usepackage{epsfig}
\usepackage{float}
\usepackage{color}
\usepackage{balance}
\usepackage{bbm}
\usepackage{textcomp}
\usepackage{enumitem}
\usepackage{url}
\usepackage{comment}
\usepackage[normalem]{ulem}
\usepackage[flushleft]{threeparttable}
\usepackage{footnote}
\makesavenoteenv{tabular}
\makesavenoteenv{table}
%\newcommand{\CLASSINPUTbottomtextmargin}{1cm}

 % % Adapt the Inter line Spacing 

\setlength{\textfloatsep}{3pt}
\def\BibTeX{{\rm B\kern-.05em{\sc i\kern-.025em b}\kern-.08em T\kern-.1667em\lower.7ex\hbox{E}\kern-.125emX}}

\newcommand{\beq}{\begin{equation}}
\newcommand{\eeq}{\end{equation}}

\newcommand{\parenthesis}[1]{\left(#1\right)}
\newcommand{\crochet}[1]{\left[#1\right]}

\newcommand{\fig}[1]{Fig. \ref{#1}}
\newcommand{\figs}[2]{Fig. \ref{#1} and \ref{#2}}

\newcommand{\mathrmbold}[1]{\boldsymbol{\mathrm{#1}}}

\newcommand{\wrt}{\textit{w.r.t.~}}
\def\softmax{\mathrmbold{softmax}}

\def\params{\mathrmbold{\theta}}
\def\cparams{\mathrmbold{w}}
\def\weights{\mathrmbold{W}}
\def\APEnergy{{\rm E}_{\rm a}(t)}
\def\UEEnergy{{\rm E}_{\rm u}(t)}
\def\ESEnergy{{\rm E}_{\rm s}(t)}
\def\TOTEnergy{{\rm E}_{\rm tot}(t)}
\def\WTOTEnergy{{\rm E}_{\rm w}(t)}

\def\obsRadio#1{\mathrmbold{o}_{#1}^{\textsc{R}}(t)}
\def\obsComp#1{\mathrmbold{o}_{#1}^{\textsc{C}}(t)}

\def\maxof#1#2{\mathrm{max}\left(#1, #2\right)}
\def\minof#1#2{\mathrm{min}\left(#1, #2\right)}

\newcommand\blfootnote[1]{%
  \begingroup
  \renewcommand\thefootnote{}\footnote{#1}%
  \addtocounter{footnote}{-1}%
  \endgroup
}

\floatstyle{ruled}
\newfloat{algorithm}{tbp}{loa}
\providecommand{\algorithmname}{Algorithm}
\floatname{algorithm}{\protect\algorithmname}

\newtheorem{remark}{Remark}
\newtheorem{proposition}{Proposition}

\newcommand{\titleheader}{This work has been accepted for publication in IEEE ICC'2021 Workshop - DAWNZ}

\makeatletter
% normal pages
\def\ps@headings{%
\def\@oddhead{\mbox{}\scriptsize \titleheader}
\def\@oddfoot{\scriptsize \@date \hfil }%
}

% title page
\def\ps@IEEEtitlepagestyle{%
\def\@oddhead{\mbox{}\scriptsize \titleheader \rightmark \hfil }%
\def\@oddfoot{\scriptsize \@date \hfil }%
}
\makeatother

%\title{Lyapunov Meets Reinforcement Learning for Energy Efficient Edge Computing}
%\title{Lyapunov Meets Distributed Reinforcement Learning for Energy Efficient Edge Computing}
\title{Energy Efficient Edge Computing: When Lyapunov Meets Distributed Reinforcement Learning}
%\vspace{-0.1 cm}
\author{\IEEEauthorblockN{Mohamed Sana$^{1}$, Mattia~Merluzzi$^{2}$, Nicola di Pietro$^{3}$, Emilio Calvanese Strinati$^{1}$}

\IEEEauthorblockA{$^{1}$CEA-Leti, Université Grenoble Alpes, F-38000 Grenoble, France\\
$^{2}$Sapienza Univ. of Rome, DIET, via Eudossiana 18, 00184, Rome, Italy\\
$^{3}$Athonet, via Cà del Luogo 6/8, 36050, Bolzano Vicentino (VI), Italy\\
Email : \{mohamed.sana, emilio.calvanese-strinati\}@cea.fr, mattia.merluzzi@uniroma1.it, nicola.dipietro@athonet.com}}

%\author{~Mohamed~Sana,
%        Mattia~Merluzzi,
%        Nicola~di~Pietro,
%        Emilio~Calvanese~Strinati %
%\thanks{The research leading to these results was partially supported by the CPS4EU project, which has received funding from the ECSEL Joint Undertaking (JU) under grant agreement N° 826276.}%
%\thanks{Mohamed Sana, Nicola di Pietro, and Emilio Calvanese Strinati are with Univ. Grenoble Alpes, CEA, Leti, F-38000 Grenoble, France (e-mail: mohamed.sana@cea.fr, nicola.dipietro@cea.fr, emilio.calvanese-strinati@cea.fr).}%
%}

\begin{document}
\maketitle\blfootnote{This work was supported by the H2020 EU/Taiwan Project 5G CONNI, Nr. 861459.}
\begin{abstract}
In this work, we study the problem of energy-efficient computation offloading enabled by edge computing. In the considered scenario, multiple users simultaneously compete for limited radio and edge computing resources to get offloaded tasks processed under a delay constraint, with the possibility of exploiting low power sleep modes at all network nodes.
The radio resource allocation takes into account inter- and intra-cell interference, and the duty cycles of the radio and computing equipment have to be jointly optimized to minimize the overall energy consumption. To address this issue, we formulate the underlying problem as a dynamic long-term optimization. Then, based on Lyapunov stochastic optimization tools, we decouple the formulated problem into a CPU scheduling problem and a radio resource allocation problem to be solved in a per-slot basis. Whereas the first one can be optimally and efficiently solved using a fast iterative algorithm, the second one is solved using \emph{distributed} multi-agent reinforcement learning due to its non-convexity and NP-hardness. The resulting framework achieves up to $96.5\%$ performance of the optimal strategy based on exhaustive search, while drastically reducing complexity. The proposed solution also allows to increase the network's energy efficiency compared to a benchmark heuristic approach. %energy consumption by $10\%$ compared to heuristic approaches.
\end{abstract}

%\begin{IEEEkeywords}
%Mobile Edge Computing, 5G and Beyond, Green Networking, Multi-agent Reinforcement Learning.
%\end{IEEEkeywords}

%\vspace{-0.1cm}
\section{Introduction}
Wireless communication networks are experiencing an unprecedented revolution, evolving from pure communication systems towards a tight integration of communication, computation, caching, and control \cite{6Gstrinati}. %In particular, 5G (and beyond) networks are expected to enable a plethora of new services, which are no more developed only for mobile end users, but for different sectors (\textit{verticals}), such as automotive, healthcare, Industry 4.0 and smart cities.  
Such a heterogeneous ecosystem requires a flexible network design and orchestration, able to accommodate on the same network infrastructure all the different services with their requirements in terms of energy, latency, and reliability. This requires an enhancement of the radio access network, e.g., through the adoption of millimeter wave (mmWave) communications, densification of access points (APs), and a flexible management of the physical layer \cite{ahmadi20195g}. In addition, the deployment of computing and storage capabilities at the network edge will enable network function virtualization and a fast %computation and 
processing of the myriad of data collected %from the environment 
by sensors, cars, mobile devices, etc. For this, Edge Computing\footnote{It is standardized by ETSI as Multi-access Edge Computing (MEC) \; {(\url{https://www.etsi.org/technologies/multi-access-edge-computing})}.} %is introduced
was conceived to enable energy efficient, low-latency, highly reliable services by bringing cloud resources close to the users.
%(\textcolor{blue}{Emilio: which is the intended message? we cannot do everything? we are limited in resources and we need to optimize to ACHIEVE THE SAME TARGET Goal? one important point is the effectiveness of the solution, that means that given the limited but local resources we design the services differently and we allocate and free resources as needed} albeit in a limited manner) close to the end users. %However, since MEC resources are limited compared to the central cloud, an effective joint management of radio and computation resources is one of the fundamental challenges of future mobile networks. 
%As a further aspect of this complex ecosystem, it should be noted that, with respect to the past

In this context, \emph{dynamic computation offloading} %is a service that 
allows resource-poor devices to %reliably 
transfer application execution to %nearby 
Edge Servers (ESs) %, with the purpose of reducing 
to reduce 
energy consumption and/or latency. From a network management %point of view
perspective, this task is complex and requires the joint optimization of radio and computation resources. %In the literature, 
This problem has received wide attention\cite{Pham2019Survey}. In %particular
\cite{LiGuan19}, a scheduling strategy is proposed to %find a trade off between 
counterbalance task completion ratio and throughput, hinging on Lyapunov optimization. \cite{ChenZhou17} aims at minimizing the long-term average delay under a long-term average power consumption constraint. In \cite{Wang19}, the long-term average energy consumption of a MEC network is minimized under a delay constraint, using a MEC sleep control. %Also, in~
\cite{ChangMiao18-2} %the problem is formulated as the minimization of 
minimizes the energy consumption under a mean service delay constraint, optimizing the number of active base stations and the ESs' computation resource allocation, % at the ES, %while considering 
leveraging sleep modes for APs and ESs. In \cite{Nan17}, Lyapunov optimization is used to reduce the energy consumption of a fog network, %while 
guaranteeing an average response time. %In \cite{Yu_Pu_2018}, %the authors exploit 
%Lyapunov optimization, Lagrange multipliers, and sub-gradient techniques are exploited to optimize devices' and APs' energy consumption under latency constraints, with AP sleep states.
In \cite{Yu_Pu_2018}, %the authors exploit 
devices' and APs' energy consumption is minimized via Lyapunov optimization, Lagrange multipliers and sub-gradient techniques, using APs' sleep state, %are exploited to optimize  
under latency constraints.

Recently, the advances of machine learning and Deep Reinforcement Learning (DRL) in wireless networks have opened up new possibilities for %designing
low-complexity and efficient algorithms for MEC \cite{6Gstrinati}, %. This is of interest, 
especially when model-based optimization is %becoming more and more 
challenged by the difficulty or even impossibility of writing mathematical models that accurately predict the networks' behavior. In this sense, the authors of \cite{Bi2020} propose to couple %exploit 
model-based Lyapunov optimization %coupled 
with model-free DRL and %. In particular, they
formulate a sum-rate maximization problem under queue stability and long-term device energy constraints. However, %only the energy consumption of static users is taken into account. Moreover, 
their reference scenario considers a single AP, and no CPU scheduling is optimized at the ES. %Then, they jointly optimize the offloading decision, the time dedicated to each user for offloading over the radio interface (assuming a time division multiple access), the local CPU frequency at each device, and the energy spent for transmission. Only the energy consumption of the users is taken into account, the reference scenario considers a single AP, and no CPU scheduling is optimized at the ES.
\cite{Bae2020} also considers the same approach, with the aim of minimizing the sum of the power consumption of the edge nodes, and a cost charged by a central cloud to help the edge node in processing the tasks under stability constraints. However, they do not consider the energy consumption of end users and APs.  %The optimization variables are the CPU scheduling at the ES and the bandwidth allocation to send tasks from the ES to the central cloud. Then, no radio parameters between end devices and AP are considered, thus neglecting the energy consumption of end devices and AP.

In this paper, we combine the convenience of a model-based solution that exploits Lyapunov stochastic optimization, with the power of model-free solutions based on multi-agent reinforcement learning (MARL), aiming at energy efficient computation offloading from an overall network perspective. We consider %a scenario where 
multiple user equipments (UEs) that perform computation offloading and compete for computation resources %in a single core of the ES
at an ES and for radio resources, %generating interference onto 
interfering onto each other's transmissions. %In particular, 
We %formulate a dynamic %offloading 
treat the problem as a long-term system energy minimization with average end-to-end delay constraints, in a network with multiple APs and one ES, all %capable of 
exploiting low-power sleep operation modes.  
Although we do not assume any knowledge on radio channels and data arrival statistics, %we propose an 
our online solution %that, in each time slot, 
optimizes in each time slot the UE-AP association in a distributed way, and the ES's CPU scheduling via a fast iterative algorithm whose solution's complexity scales linearly in the number of UEs. 

Compared to the cited works, the originality of our strategy %lies in the capability of 
consists in \emph{simultaneously}: $i)$ minimizing the duty cycles of all the network elements under delay constraints; $ii)$ effectively managing radio interference; $iii)$ being low-complexity; $iv)$ combining Lyapunov optimization with DRL; $v)$ being distributed and compatible with UE's mobility. The latter point, in sharp contrast with \cite{Bi2020}, results from the \emph{zero-shot generalization} capability of our solution: it optimizes the learned computation offloading policy for all possible deployments of UEs \textit{using attention neural networks}, and adapts when the number of UEs differs from the initial training point.

\section{System Model}\label{sec:system_model}
%\vspace{-0.15cm}

We consider %a network
the scenario %as depicted in
of~\fig{fig:network-model}, where $K$ UEs offload %their own applications 
computational tasks to an ES, via %by accessing
one out of $N$ possible APs. Let $\mathcal{U}$ and $\mathcal{A}$ be the sets of UEs and APs, respectively. Also, let $\mathcal{A}_k$ be the set of APs UE $k$ can be associated with. %Since we deal with a 
In our dynamic system, %we model
time is divided in slots of equal duration $\tau$. Specifically, we assume that a fraction $\beta\in(0,1)$ of each slot is devoted to control signaling and ($1-\beta$) to data transmission and computation, both potentially happening simultaneously. At each time slot $t$, radio channels and data arrivals at the UEs vary according to \emph{a priori} unknown statistics. Consequently, the achievable data rate over the radio channels and the computation rate at the ES vary with time. %depend on a certain policy based are chosen depending on instantaneous observations. \textcolor{blue}{Comment on this}
\vspace{-0.14 cm}
\subsection{Radio access and data rate model}
In this paper, we consider uplink communications for computation offloading. %For %the %multiple access over the 
%radio channels, 
Specifically, we assume %a
spatial division multiple access. %system, in which 
All UEs are served by the APs over the same time-frequency resources but with different beams. In this scenario, %a UE's %experiences a data rate $R_k(t)$, 
%achievable rate %$R_k(t)$ (in bits/s)
uplink communications are affected by both intra- and inter-cell interference.
%Since we consider a multi-AP scenario, 
%We introduce the binary variable $x_{kn}(t)$, which equals $1$ if UE $k$ is assigned to AP $n$ at time $t$, and $0$ otherwise. %In particular,
Indeed, suppose that UE $k$ is served by AP $n$ at time $t$. Let $p_k^{u,\rm tx}(t)$ be the uplink transmit power of UE $k$, $G_{kn}^{\rm ch}(t)$ the channel power gain between UE $k$ and AP $n$, $G_{kn}^{\rm tx}(t)$ the transmit antenna gain towards the direction of AP $n$, $G_{kn}^{\rm rx}(t)$ the receive antenna gain, $B$ the allocated bandwidth, and $N_0$ the noise power spectral density. %\textcolor{green}{Note that another generic UE $k'$, even if not associated with AP $n$, interferes with UE $k$, due to a non zero antenna gain $G_{k'n}^{\rm tx}(t)$ towards the direction of AP $n$.}
Then, the signal-to-interference-plus-noise ratio (SINR) is given by $$\textrm{SINR}_{k}(t)=\frac{%I_k^u(t)
p_k^{u,\rm tx}(t)G_{kn}^{\rm tx}(t)G_{kn}^{\rm ch}(t)G_{kn}^{\rm rx}(t)}{\mathcal{I}_{kn}(t)+N_0B},$$ where %$\displaystyle I_k^{\rm u}(t) = \max_{n \in \mathcal{A}_k}\{x_{k,n}(t)\}$, and 
%the interference $\mathcal{I}_{kn}(t)$ is 
$\mathcal{I}_{kn}(t)=\sum\nolimits_{k'\in\mathcal{U}\setminus\{k\} }%I_{k'}^u(t)
p_{k'}^{u,\rm tx}(t)G_{k'n}^{\rm tx}(t)G_{k'n}^{\rm ch}(t)G_{k'n}^{\rm rx}(t)$
is the overall interference. Then, the achievable rate of UE $k$ at time $t$ is %given by the Shannon formula as
$R_k(t)=%\sum_{n}
B\log_2(1+\textrm{SINR}_{k}(t))$.
If the $k$-th UE's offloadable data unit is encoded into $S_k$ bits, the number of data units transmitted in the uplink direction at time $t$ is
\begin{equation}\label{num_uplink}
    N_k^u(t)=\left\lfloor\frac{(1-\beta)\tau R_k(t)}{S_k}\right\rfloor.
\end{equation}
%\vspace{-0.5cm}
\subsection{Computation model}
At the ES, %we assume that 
all UEs are served by one core and compete for the CPU time in each time slot. %In particular, 
Given a %CPU 
core frequency $f_c(t)$ (measured in CPU cycles per second), each UE is allocated a portion $f_k(t)$ of $f_c(t)$, with $\sum_{k=1}^Kf_k(t)\leq f_c(t)$. %Also, we assume that a certain number of CPU cycles is needed to process one data unit. 
Then, denoting by $J_k$ the number of processed data units per CPU cycle, the number of data units processed over one slot is
\begin{equation}\label{num_computed}
    N_k^c(t)=\lfloor(1-\beta)\tau f_k(t)J_k\rfloor.
\end{equation}

\begin{figure}
    \centering
    \includegraphics[width=\columnwidth]{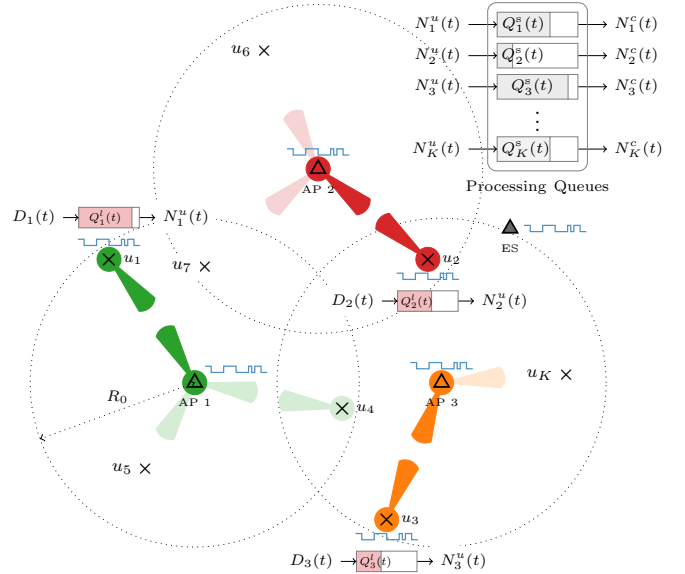}
    \caption{Network model with 3 APs deployed with $K$ UEs.}
    \label{fig:network-model}
\end{figure}

\vspace{-0.2cm}
\subsection{Delay and queuing model}
In our setting, computation offloading involves two steps: $i)$ an uplink transmission phase of input data from the UEs; $ii)$ a computation phase at the ES.
New data units are continuously generated from an application at the UE's side and consequently offloaded and processed at the ES. In particular, once generated, data are queued locally at the UEs, then uploaded to the ES through one AP with time varying data rate as in \eqref{num_uplink}. At the ES, they are queued waiting to be processed and they are finally computed with time varying computational rate as in \eqref{num_computed}. Thus, we represent the overall system through a simple queuing model involving both queues, %This model allows us to characterize the total average delay experienced by a data unit from its generation at the mobile side until the computation is performed by the ES. 
%The model is 
synthetically depicted in \fig{fig:network-model}. Accordingly, each data unit experiences two different delays: $i)$ a communication delay, including buffering at the UE; $ii)$ a computation delay, including buffering at the ES. As shown later, we take into account these two delays jointly, as in \cite{Merluzzi2020URLLC}.
%\underline{\textit{Uplink communication queue:}} 
UE $k$'s uplink communication queue evolves as
%\begin{equation}\label{num_bits}
 %   N_k^{u}(t)=\left\lfloor\frac{\tau R_k(t)}{S_k}\right\rfloor,\nonumber
%\end{equation}
%where $S_k$ is the size in bits of a data unit. The local queue evolution of data can be written as
%\begin{equation*}%\label{q_loc}
$Q_k^l(t+1)=\max\left(0, Q_k^l(t)-N_k^u(t)\right)+D_k(t)$,
%\end{equation*}
where $D_k(t)$ is the number of newly arrived offloadable data units generated by the application %that runs 
at the UE at time $t$, %, and $N_k^u(t)$ reads as in \eqref{num_uplink}. 
%$D_k(t)$ is modeled as 
which is the realization of a random process whose statistics are %supposed to be 
unknown \emph{a priori}. %\underline{\textit{Remote computation queue:}} 
The remote computation queue at the ES evolves as
%In this work, we assume that the number of input data units treated by the ES for UE $k$ is proportional to the number of CPU cycles allocated for this task. Thus, if we denote by $J_k$ the proportionality factor, we can write number of computed tasks during time slot $t$ as
%\begin{equation}
 %   N_k^c(t)=\lfloor\tau f_k(t)J_k\rfloor
%\end{equation}
%where $f_k(t)$ is the CPU cycle frequency assigned to user $k$ during time slot $t$.
%can define a queue of input data units waiting for being processed at the server, which evolves as follows:
%\begin{align*}%\label{q_rem}
$Q_k^{\rm s}(t+1)=\max(0,Q_k^{\rm s}(t)-N_k^c(t))+\min(Q_k^l(t),N_k^u(t)).$
%\end{align*}
%where $N_k^c(t)$ reads as in \eqref{num_computed}.

\vspace{0.1cm}
\noindent
\textbf{End-to-end delay constraints.} %
By Little's law \cite{little1961}, given a stationary queueing system, the average overall service delay is proportional to the average queue length. Then, in our system, the overall delay is directly related to the sum of the uplink communication queue and the computation queue $Q_k^\text{tot}(t)=Q_k^l(t)+Q_k^{\rm s}(t)$. In particular,
if $\displaystyle\bar{D}_k=\mathbb{E}\left\{D_k(t)/\tau\right\}$ is the average data unit arrival rate, the long-term average end-to-end delay $L_k^{\rm avg}$ experienced by a data unit generated by UE $k$ is given by the ratio between the average of %the total queue length 
$Q_k^{\rm tot}$ and the average arrival rate.
%\begin{equation}\label{latency}
%\bar{D}_k^\infty=\lim_{T\to\infty}\frac{1}{T}\sum_{t=1}^T\mathbb{E}\left\{\frac{Q_k^{\rm tot}(t)}{\bar{D}_k}\right\},
%\end{equation}
%where the expectation is taken with respect to the random radio channel and data arrival realizations. 
Thus, our %first aim is to guarantee a 
constraint on the long-term average delay $L_k^{\textrm {avg}}$ is formulated as follows:
\begin{equation}\label{avg_q}
\lim_{T\to\infty}\frac{1}{T}\sum_{t=1}^T\mathbb{E}\left\{Q_k^{\rm tot}(t)\right\}\leq Q_k^{\textrm{avg}} = L_k^{\textrm {avg}}\bar{D}_k,\quad \forall k.
\end{equation}
Note: $\bar{D}_k$ is not known \emph{a priori}, but can be estimated online.
%\vspace{-0.1cm}
\subsection{Energy consumption model}
%\textcolor{red}{As already mentioned, in each slot, we assume that a fraction $\beta\tau$ is dedicated to the control signaling, and remain part, i.e. $(1-\beta)\tau$ is dedicated to the actual computation offloading (data uplink and downlink transmission and elaboration at the ES). Thus, since we exploit low power sleep modes to reduce energy consumption, each network agent will be anyway active for the fraction $\beta\tau$ for control signaling, thus consuming the active power for at least $\beta\tau$ seconds in each time slot.}
%In this paper, 
We exploit low-power operation modes at UEs, APs, and ES %, as a way 
to reduce the overall energy consumption. However, due to control signaling, UEs, APs, and ES are active for at least $\beta \tau$ seconds in each slot, consuming active power. Hence, the energy consumption of each entity %can be 
is modeled as follows:

\vspace{0.1cm}
\noindent
\textbf{UE Energy Consumption.} %
Let $x_{k,n}(t) \in \{0,1\}$ be an association variable %such that $x_{kn}(t)=1$ 
equal to $1$ if and only if UE $k$ offloads its data via AP $n$ at time $t$. %, and $x_{kn}(t)=0$ otherwise. Also, 
Let $p_{k}^{\rm u, off}$ and $p_{k}^{\rm u, on}$ %, $p_{k}^{\rm u, tx}(t)$ 
be UE $k$'s sleep and active power, respectively. %, active power, and uplink transmission power of UE $k$. 
Then, the total UEs' energy consumption at time $t$ is:
\begin{IEEEeqnarray}{rCl}\label{UE_energy}
{\rm E}_{\rm u}(t) &=& \sum_{k=1}^K\tau\bigg[(1-\beta) \bigg(I_k^{\rm u}(t) \parenthesis{p_k^{\rm u, on}+p_k^{\rm u, tx}(t)}\nonumber\\
&&\qquad\qquad+ (1-I_k^{\rm u}(t))p_k^{\rm u, off}\bigg)+\beta p_k^{\rm u, on}\bigg],
\end{IEEEeqnarray}
where $\displaystyle I_k^{\rm u}(t) = \max_{n \in \mathcal{A}_k}\{x_{k,n}(t)\}$ indicates if UE $k$ is active: %or not. 
%Indeed, %in a given time slot, 
%a UE can decide to not associate with any AP, hence, 
if it %UE $k$ 
does not transmit at time $t$, $I_k^{\rm u}(t) =0$, and $p_k^{\rm u, tx}(t)=0$.

\vspace{0.1cm} %Hence, the total UE transmit energy is simply given by
%\begin{equation}\label{UE_energy}
 %   \UEEnergy = \sum_{k=1}^K {\rm E}_k^{\rm u}(t).
%\end{equation}
\noindent
\textbf{AP Energy Consumption.} %
%Denoting by %us denote with 
Let $p_{n}^{\rm a, off}$ and $p_{n}^{\rm a, on}$ be the $n$-\textit{th} AP's sleep and active power consumption, respectively. %, respectively, the
The total APs' energy consumption at time $t$ is
\begin{IEEEeqnarray}{rCl}\label{AP_energy}
{\rm E}_{\rm a}(t) &=&\sum_{n=1}^N\tau\bigg[ (1-\beta) \parenthesis{I_n^{\rm a}(t) p_n^{\rm a, on} + (1-I_n^{\rm a}(t))p_n^{\rm a, off}}\nonumber\\
&&\qquad\qquad+\beta p_n^{\rm a, on}\bigg],
\end{IEEEeqnarray}
where $\displaystyle I_n^{\rm a}(t) = \max_{k \in \mathcal{U}}\{x_{k,n}(t)\}$ indicates whether AP $n$ is active ($I_n^{\rm a}(t) = 1$) or not ($I_n^{\rm a}(t) =0$). %, i.e an AP serving at least one UE is active. %Hence, the total AP power consumption is:
%\begin{equation}\label{AP_energy}
 %  \APEnergy= \sum_{n=1}^N {\rm E}_n^{\rm a}(t)
%end{equation}

\vspace{0.1cm}
\noindent
\textbf{ES Energy Consumption.} %
%In this paper, we adopt C-states operating on a specific core, dedicated to treat the offloaded tasks of all the mobile users of our system. In particular, we consider the C0-state, in which the CPU core is active and executing some thread, and the C1-state, in which the CPU clock frequency is driven to zero. 
%In this paper, we 
To reduce the energy consumption, we adopt both a low-power sleep mode for the ES, when no computation is performed at a given slot $t$, %time slot, 
and a scaling of the CPU frequency $f_c(t)$, %to reduce the energy consumption 
when the CPU is active and computing \cite{LeSueur10}. %Then, in our model, 
Namely, the CPU core consumes a power $p_{\rm s}^{\rm on}$ %just for being 
in active state, and a power $p_{\rm s}^{\rm off} < p_{\rm s}^{\rm on}$ in sleep state. %Moreover, 
When the ES is active, the dynamic power spent for computation is 
%\begin{equation}\label{eq:p_comp}
%p_{\rm s}^{\rm c}(t)=\kappa f_c^3(t),
%\end{equation}
$p_{\rm s}^{\rm c}(t)=\kappa f_c^3(t)$, where %$f_c(t)$ is the CPU frequency used during time slot $t$, and 
$\kappa$ is the effective switched capacitance of the processor \cite{Burd1996}. %We suppose that it is possible to use dynamic voltage frequency scaling to scale down the frequency \cite{LeSueur10}, thus reducing the dynamic power consumption. 
In particular, we assume that $f_c(t)$ can be dynamically selected from a finite set $\mathcal{F}=\{0,\ldots,f_{\max}\}$. Therefore, the %overall system 
ES's energy consumption at time $t$ is
\begin{IEEEeqnarray}{rCl}\label{MEH_energy}
\ESEnergy &=&(1-\beta)\tau \parenthesis{I_{\rm s}(t) \parenthesis{p_{\rm s}^{\rm on} + p_{\rm s}^{\rm c}(t)} + (1-I_{\rm s}(t))p_{\rm s}^{\rm off}} \nonumber\\
&&{~}+ \beta\tau p_{\rm s}^{\rm on},
\end{IEEEeqnarray}
where $I_{\rm s}(t)=\mathbf{1}\{f_c(t)\}$, with $\mathbf{1}\{\cdot\}$ the indicator function, 
indicates whether the ES is active ($I_{\rm s}(t)=1$) or not ($I_{\rm s}(t)=0$). %Then, 
From (\ref{UE_energy}), (\ref{AP_energy}), (\ref{MEH_energy}), the total system energy consumption at time $t$ is $\TOTEnergy = \ESEnergy + \APEnergy + \UEEnergy$. %In this paper,
Our objective function %, we consider 
is a convex combination of %\eqref{UE_energy}, \eqref{AP_energy}, and \eqref{MEH_energy}:
its terms:
%the energy consumption of all elements: %, i.e. %written as 
\begin{equation}\label{weighted_energy}
    \WTOTEnergy = \alpha_1 \UEEnergy + \alpha_2 \APEnergy + \alpha_3 \ESEnergy,
\end{equation}
with $\sum_{i=1}^3\alpha_i=1$. %In particular, 
Different $\alpha_i$ lead to different strategies. For example, $\alpha_1=1$ models a \textit{user-centric} strategy, where only UEs' energy consumption is optimized. $\alpha_i=1/3, \forall i$ yields a \textit{holistic} strategy that includes the whole network's energy~\cite{MerluzziDMEC}. %, where the whole network energy is taken into account.
%\vspace{-0.3cm}
\section{Problem Formulation}
%\vspace{-0.4cm}
%Based on the system model %defined in 
%of Section \ref{sec:system_model}, we formulate 
We formulate the following minimization problem %to minimize the long-term average sum of the network entities' 
on the weighted network energy consumption, %$\WTOTEnergy$ defined in \eqref{weighted_energy}, %under average delay constraints 
subject to \eqref{avg_q} and instantaneous constraints on the optimization variables:
\vspace{-0.3cm}
\begin{align}
\underset{\{\mathbf{\Psi}(t)\}}{\mathrm{minimize}}~~& \lim_{T\to\infty}\frac{1}{T}\sum_{t=1}^{T}\mathbb{E}\{\WTOTEnergy\}, \tag{$\mathcal{P}_0$} \label{pb-genesis}\\
\mathrm{subject~to~}{}& %\lim_{T\to\infty}\frac{1}{T}\sum_{t=1}^T\mathbb{E}\crochet{Q_k^{\textrm{tot}}(t)}\leq Q_k^{\textrm {avg}}, && \forall k; \tag{C1}
\;\text{Eqn.}\; \eqref{avg_q} \tag{C1}\label{eq:C1}\\
     {}& x_{k,n}(t)\in\{0,1\}, && \forall k,n, \tag{C2}\label{eq:C2}\\
	 {}&\sum_{k \in \mathcal{U}}x_{k,n}(t) \leq N_{n}, &&\forall n,\tag{C3}\label{eq:C3}\\
	 {}&\sum_{n \in \mathcal{A}_k}x_{k,n}(t) \leq 1, && \forall k,\tag{C4}\label{eq:C4}\\
	% {}& I_{\rm s}(t) \in \{0,1\}, \tag{C5}\label{eq:C5}\\
	 {}&f_{k}(t)\geq 0, && \forall k,t,\tag{C5} \label{eq:C5}\\
	  {}&f_{c}(t) \in \mathcal{F}, && \forall t, \tag{C6} \label{eq:C6}\\
	 {}&\sum_{k \in \mathcal{U}} f_{k}(t)\leq f_c(t), && \forall t, \tag{C7} \label{eq:C7}
\end{align}
where $\mathbf{\Psi}(t)=\crochet{\{x_{k,n}(t)\}_{k,n},f_c(t),\{f_k(t)\}_k}$ and the expectation is taken with respect to the random input data unit generation %arrivals 
and radio channels, whose statistics are unknown. %supposed to be unknown in advance.
\eqref{eq:C1} %of \eqref{pb-genesis} 
is the delay constraint. %whereas  Constraints
\eqref{eq:C2}-\eqref{eq:C7} mean what follows: %have the following meaning: 
\eqref{eq:C2} the UE-AP association variables are binary; \eqref{eq:C3} the number of UEs assigned to each AP cannot exceed a maximum $N_n$; \eqref{eq:C4} each UE is assigned to at most one AP; %\eqref{eq:C5} the active state variable of the ES is binary; 
\eqref{eq:C5}-\eqref{eq:C7} the computation frequencies assigned to each user are non negative and their sum cannot exceed the total CPU frequency of the ES, chosen from the finite set $\mathcal{F}$. %\eqref{eq:C6} the total CPU frequency of the ES is chosen from a discrete finite set $\mathcal{F}$; \eqref{eq:C7} the sum of the CPU resources assigned to all UEs cannot exceed $f_c(t)$.

Directly solving \eqref{pb-genesis} is very challenging due to the unavailability of the statistics. Therefore, we hinge on %the tools of 
Lyapunov optimization. %First of all, to 
To handle \eqref{eq:C1}, we introduce \textit{virtual queues} \cite{Neely10} $Z_k(t)$ that evolve as
%\begin{equation*}%\label{virtual_Z}
$Z_k(t+1)=\max(0,Z_k(t)+Q_k^{\rm tot}(t+1)-Q_k^{\rm avg})$
%\end{equation*}
for all $k$.
$Z_k(t)$ is a state variable that measures how the system behaves w.r.t. \eqref{eq:C1}: it increases if the instantaneous value of $Q_k^{\rm tot}(t)$ violates the constraint, and decreases otherwise. It can be easily shown \cite{Neely10} that \eqref{eq:C1} is guaranteed if $Z_k(t)$ is \textit{mean rate stable}, \emph{i.e.}, $\lim_{T\to\infty}\frac{\mathbb{E}\{Z_k(T)\}}{T}=0$. %Now, to ensure the mean rate stability of $Z_k(t)$, 
To ensure this, we introduce the \emph{Lyapunov} %function 
and the \textit{drift-plus-penalty} functions:
\vspace{-0.3cm}
\begin{align}\label{drift_plus_penalty}
 L(\mathbf{Z}(t)) &=\frac{1}{2}\sum_{k=1}^K Z_k(t)^2,\\
    \Delta_p(\mathbf{Z}(t))  &=\mathbb{E}\left\{L(\mathbf{Z}(t+1))-L(\mathbf{Z}(t))+\Omega\cdot\WTOTEnergy|\mathbf{Z}(t)\right\}.\nonumber
\end{align}
Here, $L(\mathbf{Z}(t))$ %represents a measure of 
``measures'' the %overall
system's congestion, whereas %the drift-plus-penalty function
$\Delta_p(\mathbf{Z}(t))$ is the conditional expected variation of $L(\mathbf{Z}(t))$ over one slot, plus a penalty factor %with a weighting parameter 
weighted by $\Omega$, which trades off queue backlogs and the objective function of \eqref{pb-genesis} \cite{Neely10}. %The higher is $\Omega$, the more importance is assigned to the objective function rather than to queue backlogs. 

%Also, as $\Omega$ increases, the optimal solution of \eqref{pb-genesis} is asymptotically achieved \cite{Neely10}. 
%However, directly minimizing \eqref{lyapunov_drift} does not take into account the objective function of \eqref{pb-genesis}, so that it can lead to an unnecessary system energy consumption. However, to guarantee the mean rate stability of the virtual queues while reducing the value of the objective function, we introduce the \textit{drift-plus-penalty} %function
%\begin{equation}\label{drift_plus_penalty}
 %   \Delta_p(\mathbf{Z}(t)) =\Delta(\mathbf{Z}(t))+\mathbb{E}\left\{\Omega\cdot\WTOTEnergy|\mathbf{Z}(t)\right\},
%\end{equation}
%which is a penalized version of \eqref{lyapunov_drift}, with a weighting parameter $\Omega$ used ti assign more importance to the objective function or queue lengths. 
%Now, rather than directly minimizing \eqref{drift_plus_penalty}, 

\begin{proposition}
If the radio channel states and the input data generation are i.i.d.~over time slots, the \emph{optimal} solution of \eqref{pb-genesis} is obtained when optimally solving the following two sub-problems for a sufficiently high value of $\Omega$.

\noindent
\textbf{%Sub-problem 
For CPU scheduling:} %For CPU scheduling, 
At time $t$, solve the following %optimization 
problem:
\begin{align}\label{sub-pb-computation}
\underset{\{f_c(t), \{f_k(t)\}_{k}\}}{\mathrm{minimize}}\;\, & G_1(t) = \Omega \alpha_3 \ESEnergy + \sum_{k=1}^K\bigg[ -2Q_k^{\rm s}(t)\tau f_k(t)J_k  \nonumber\\
&\hspace{-0.35 cm}+ \max\left(0,Q_k^{\rm s}(t)-\tau f_k(t)J_k+1\right)Z_k(t) \bigg]\tag{$\mathcal{P}_1$} \\
\mathrm{subject~to~}{}&\quad \eqref{eq:C5}\text{-}\eqref{eq:C7}\;\text{of}\;\eqref{pb-genesis}\nonumber.
\end{align}

\noindent 
\textbf{%Sub-problem 
For UE-AP association:} %For the UE-AP association 
At time $t$, solve the following: %optimization problem:
\begin{align}\label{sub-pb-radio}
\underset{\{x_{kn}(t)\}_{k,n}}{\mathrm{minimize}} \quad & G_2(t) = \Omega\cdot \left(\alpha_1 \UEEnergy+\alpha_2 \APEnergy \right)\nonumber\\
&+\sum_{k=1}^K\bigg[\left(-\frac{3}{2}Q_k^l(t) + Q_k^{\rm s}(t)\right)N_k^{u}(t) \nonumber\\
&+\max\left(0,Q_k^l(t)-N_k^{u}(t)\right)Z_k(t)\bigg] \tag{$\mathcal{P}_2$}\\
\mathrm{subject~to~}{}&\quad \eqref{eq:C2}\text{-}\eqref{eq:C4} \;\text{of}\; \eqref{pb-genesis}. \nonumber %\quad & x_{k,n}(t) \in \{0,1\}, \quad\forall k,n\nonumber\\
	 %{}\quad &\sum_{k \in \mathcal{U}_n}x_{k,n}(t) \leq N_{n}, \quad \forall n \nonumber\\
	 %{}\quad &\sum_{n \in \mathcal{A}_k}x_{k,n}(t) \leq 1, \quad \forall k \nonumber
\end{align}
\end{proposition}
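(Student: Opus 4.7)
The strategy is the classical Lyapunov drift-plus-penalty framework of \cite{Neely10}: derive a tractable upper bound on $\Delta_p(\mathbf{Z}(t))$ whose per-slot minimization---the \emph{opportunistic expectation minimization} principle---enforces mean-rate stability of each $Z_k(t)$ (hence \eqref{eq:C1}) and drives the time-averaged weighted energy to within $O(1/\Omega)$ of the infimum of \eqref{pb-genesis} under the stated i.i.d.\ assumption. The proposition then follows by showing that this upper bound is \emph{separable}: it splits additively into a deterministic constant, a term $G_1(t)$ depending only on $\{f_c(t),\{f_k(t)\}\}$, and a term $G_2(t)$ depending only on $\{x_{k,n}(t)\}$, so that its per-slot minimization reduces to independently solving $(\mathcal{P}_1)$ and $(\mathcal{P}_2)$.

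Concretely, I would first apply $(\max(0,x))^2 \leq x^2$ to the virtual-queue update to bound $Z_k(t+1)^2-Z_k(t)^2$ by a constant residual plus the linear term $2Z_k(t)(Q_k^{\rm tot}(t+1)-Q_k^{\rm avg})$, then substitute the uplink and server queue dynamics for $Q_k^l(t+1)$ and $Q_k^s(t+1)$, augmenting the Lyapunov function with $(Q_k^l)^2$ and $(Q_k^s)^2$ terms as needed to expose the uplink-dependent coefficients that appear in $G_2$. Boundedness of $N_k^u(t)$ (finite modulation) and $N_k^c(t)$ (finite CPU set $\mathcal{F}$), together with a standard second-moment bound on $D_k(t)$, collapses all squared increments into a finite constant $B$, leaving
\[
\Delta_p(\mathbf{Z}(t)) \leq B + \mathbb{E}\{G_1(t)+G_2(t)\,|\,\mathbf{Z}(t)\}.
\]
Since $\UEEnergy$ and $\APEnergy$ depend only on $\{x_{k,n}(t)\}$ and $\ESEnergy$ only on $\{f_c(t),\{f_k(t)\}\}$, and since $N_k^u(t)$ (resp.\ $N_k^c(t)$) involves the association (resp.\ computation) variables only, the weighted-energy penalty decomposes along the same lines as the queue terms.

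The hard part is this decoupling, because $Q_k^s(t+1)$ depends on $N_k^u(t)$ through $\min(Q_k^l(t),N_k^u(t))$, so the server-side drift \emph{a priori} couples radio and CPU variables. The key identity $\max(0,x-y)+\min(x,y)=x$ for $x,y\geq 0$ collapses $Q_k^l(t+1)+Q_k^s(t+1)$ into an expression where $N_k^u(t)$ and $N_k^c(t)$ appear in separate residuals; the specific coefficients ($-\frac{3}{2}$, $+1$, the ``$+1$'' inside the $\max$ in $G_1$ that accounts for the floor in $N_k^c(t)$, etc.) then arise from the particular linear majorizations chosen for each residual. Once separability is in place, invoking \cite[Thm.~4.8]{Neely10} yields both mean-rate stability of $Z_k(t)$ (securing \eqref{eq:C1}) and convergence of the time-averaged objective to the infimum of \eqref{pb-genesis} as $\Omega\to\infty$, which is the asymptotic sense in which the decomposed per-slot solution is ``optimal''.
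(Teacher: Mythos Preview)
Your proposal is correct and follows the same Lyapunov drift-plus-penalty route as the paper, invoking \cite[Th.~4.8]{Neely10} for asymptotic optimality and identifying the key decoupling step (the identity $\max(0,x-y)+\min(x,y)=x$ that prevents $N_k^u(t)$ from contaminating the CPU sub-problem).

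There is one technical difference worth noting. You propose to collapse the quadratic residual from the $Z_k$-update into a constant and then \emph{augment} the Lyapunov function with $(Q_k^l)^2$ and $(Q_k^s)^2$ to generate the physical-queue cross-terms that appear in $G_1$ and $G_2$. The paper does \emph{not} augment $L(\mathbf{Z}(t))=\tfrac{1}{2}\sum_k Z_k(t)^2$; instead it \emph{retains} the quadratic term $\tfrac{1}{2}(Q_k^{\rm tot}(t+1)-Q_k^{\rm avg})^2$ in the drift bound and expands it via $(x+y)^2\leq 2x^2+2y^2$ and $(\max(0,Q-b)+A)^2\leq Q^2+A^2+b^2+2Q(A-b)$. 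It is this expansion---not an augmented Lyapunov function---that produces the coefficients $-2Q_k^s(t)$, $-\tfrac{3}{2}Q_k^l(t)$, and $+Q_k^s(t)$. Both mechanisms are legitimate and lead to separable bounds of the same structure; your route is arguably the more standard one for tandem queues, while the paper's keeps the Lyapunov function minimal but exploits the fact that $Z_k(t+1)$ already contains $Q_k^{\rm tot}(t+1)$ quadratically. If you want to reproduce the paper's \emph{exact} $G_1$, $G_2$ (with those specific constants), you must follow its expansion rather than augmenting.
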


\begin{proof}[Sketch of proof]
From \cite{Neely10}, we know that, if %the drift-plus-penalty function 
$\Delta_p(\mathbf{Z}(t))$ %in \eqref{drift_plus_penalty} 
is bounded, the $Z_k(t)$'s are mean rate stable and therefore \eqref{eq:C1} is guaranteed. Now, %it can be formally proved 
the main statement follows because minimizing a proper upper bound of $\Delta_p(\mathbf{Z}(t))$ under \eqref{eq:C2}-\eqref{eq:C7} %of \eqref{pb-genesis} 
is  equivalent to optimally solving \eqref{pb-genesis} when $\Omega$ is sufficiently large \cite[Th. 4.8]{Neely10}. %Therefore, our policy proceeds by minimizing %an upper bound of \eqref{drift_plus_penalty} to ``push'' the queues towards lower congestion states, i.e. towards system stability. %In particular, 
%It can be shown that \eqref{drift_plus_penalty} enjoys 
The considered upper bound is:
\begin{IEEEeqnarray}{ll}\label{eq:upperbound_dpp}
    {}&\Delta_p(\mathbf{Z}(t)) \leq \zeta +\mathbb{E}\bigg\{\sum_{k=1}^K\bigg[\chi_k(t)-2Q_k^{\rm s}(t)\tau f_k(t)J_k \nonumber\\
    {}&+\!\left(\maxof{0}{Q_k^{\rm s}(t)-N_k^c(t)}+\maxof{0}{Q_k^l(t)-N_k^{u}(t)}\right)\!Z_k(t)\nonumber\\
    {}&+\parenthesis{-\frac{3}{2}Q_k^l(t) + Q_k^{\rm s}(t)}N_k^{u}(t) \bigg]+\Omega \cdot\WTOTEnergy\bigg|\mathbf{Z}(t) \bigg\},
\end{IEEEeqnarray}
where $\zeta>0$ is a constant and $\chi_k(t)$ does not depend on the optimization variables. %\textcolor{red}{The derivations leading to \eqref{eq:upperbound_dpp}, and the expressions of $\zeta$ and $\chi_k(t)$ are omitted due to the lack of space but follow a similar approach as in \cite{Merluzzi2020URLLC,MerluzziDMEC}.}
%\begin{proof}
%See Appendix \ref{appendix}
%\end{proof}
To obtain \eqref{eq:upperbound_dpp}, note first that 
%For a generic virtual queue $X(t)$ evolving as $ X(t+1)=\max(0,\ X(t)+y(t+1)-\bar{y})$, we have from
\cite[p. 59]{Neely10}
%Notice that, from \cite[p. 59]{Neely10} we have:
%\begin{align} \label{eq:init_up}
%    \frac{X_k(t+1)^2-X_k(t)^2}{2}\leq \frac{((t+1)-\bar{y})^2}{2}\\\nonumber
%    +X(t)y(t+1)-X(t)\bar{y}.
%\end{align}
\begin{align} \label{eq:init_up}
    \frac{Z_k(t+1)^2 - Z_k(t)^2}{2} \leq \frac{\parenthesis{Q_k^{\rm tot}(t+1)-Q_k^{\rm avg}}^2}{2}\\
    + Z_k(t)
    \parenthesis{Q_k^{\rm tot}(t+1)- Q_k^{\rm avg}}. \nonumber
\end{align}
%Then %the upper bound
%\eqref{eq:upperbound_dpp} is obtained by first %i) 
Then, apply \eqref{eq:init_up} and the following two inequalities to the $Z_k(t)$'s in $\Delta_p(\mathbf{Z}(t))$: %drift-plus-penalty \eqref{drift_plus_penalty};
%then, %ii)
%by applying the following: %inequality:
$(x+y)^2\leq 2x^2+2y^2, ~\forall~x,y$; %finally, by noting that %iii)
$\max(0,Q-b)+A)^2\leq Q^2+A^2+b^2+2Q(A-b), ~\forall A,b\geq0$.
Full derivations and expressions of $\zeta$ and $\chi_k(t)$ are omitted due to the lack of space, but follow a similar approach as in \cite{Merluzzi2020URLLC,MerluzziDMEC}.
%\textcolor{red}{In the Appendix, we briefly explain how to derive \eqref{eq:upperbound_dpp} from \eqref{drift_plus_penalty}.}
Now, according to the concept of \emph{greedy optimization of a conditional expectation} \cite{Neely10}, %our 
to obtain an optimal policy it is sufficient to minimize %the upper bound of \eqref{drift_plus_penalty} 
\eqref{eq:upperbound_dpp} \emph{in a slot-by-slot fashion}, only based on the observation of instantaneous queue lengths, radio channels, and data generation at the UEs.
%It can be mathematically proved that this is fully  equivalent to optimally solve the following %This is done through the solution of two sub-problems 
The decomposition into two sub-problems is straightforward %Proof follows by first observing that
because radio and computing optimization variables are decoupled in \eqref{eq:upperbound_dpp} and can be treated independently.
\end{proof}

% the first one to find the optimal association variables, and the second one for the optimal CPU scheduling at the ES. The two sub-problems are formulated as follows.}\\
%This can be divided into two sub-problems $\mathcal{P}_1$ and $\mathcal{P}_2$.
%\textcolor{red}{From here we can just say: Minimizing the upper bound of problem \eqref{eq:upperbound_dpp} is equivalent to solving the following two sub-problems, then refer to your paper D-MEC to solve $\mathcal{P}_2$ (with possibly some hints in the appendix where we can introduce lemma 2) and point out why $\mathcal{P}_1$ is more complex and require more advance solution and then we continue with the next sections?? }\textcolor{blue}{Yes, I will write something but that's the idea.} \textcolor{magenta}{Nicola: we should write this as a "Proposition", so that it's easily referable and well visible. Proof can be shortened into a few lines, with references to Neely and papers where we used a similar approach.}

Problem \eqref{sub-pb-computation} can be efficiently and optimally solved using a fast iterative algorithm as in \cite{MerluzziDMEC}, which requires at most $K\times|\mathcal{F}|$ iterations. %, with $|\mathcal{F}|$ the cardinality of $\mathcal{F}$. 
However, \eqref{sub-pb-radio} is more complex as it is non-convex and NP-hard \cite{sana2020UA}. Therefore, we propose %to use 
a MARL strategy, where UEs, modeled as autonomous agents, learn to offload tasks over multiple episodes of random deployments %, in order 
to maximize a long-term $\gamma$-discounted reward $\sum_{\tau=t+1}^{T} \gamma^{\tau-t-1}r(\tau)$, where  $r(t) = -G_2(t)$ is the common reward perceived by each UE at time $t$ and $T$ is the length of an episode. %However, 
From a Lyapunov optimization perspective, the long-term goal (minimization of the long-term average energy) is guaranteed when \eqref{sub-pb-radio} is solved optimally slot by slot. %Accordingly, 
Here, this is achieved by myopically maximizing the instantaneous reward instead of the long-term reward, i.e., by 
setting $\gamma = 0$.

\begin{remark}
%As defined, 
During an episode, $r(t)$ can drop to $-\infty$ due to the presence of queues in the expression of %the objective
$G_2(t)$, which is not bounded. To solve this problem, note that in a feasible scenario, the queues growing to infinity result from UEs deciding systematically to not offload (which is a wrong policy). Hence, we define two clipping values  $Q_k^{\rm clip}= (1+\epsilon_1) Q_k^{\rm avg}$ and $Z_k^{\rm clip} = (1+\epsilon_2) (Q_k^{\rm avg})^2$, parameterized by $\epsilon_1$, $\epsilon_2$ and considered as the maximum tolerable value of physical and virtual queues respectively, above which an episode terminates with a failure. In this way, we improve the learning convergence, as UEs are quickly notified of their failure. %\textcolor{red}{In Appendix \ref{appendix}, we present our approach to derive suitable values of $Q_k^{\rm clip}$ and $Z_k^{\rm clip}$ based on simulation parameters.}
\end{remark}
\vspace{-0.3cm}
\section{Proposed Lyapunov-aided %multi-agent reinforcement learning 
MARL for mobile edge computing}
\vspace{-0.1cm}
\begin{figure}
    \centering
    \includegraphics[scale=0.75]{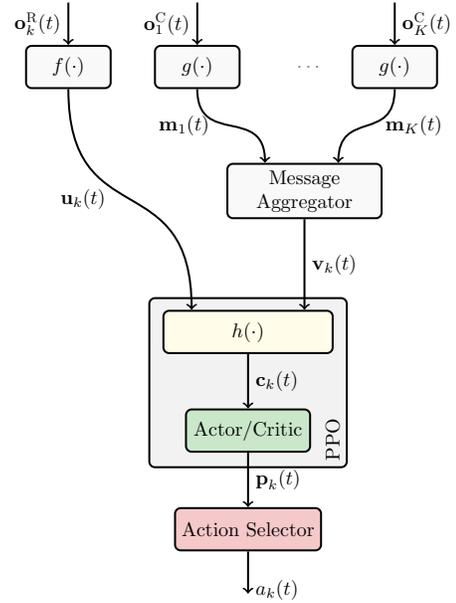}
    \caption{MEC Offloading policy network architecture.}
    \label{fig:pna-archi}
\end{figure}
%\subsection{General framework}
To solve the radio resource allocation problem \eqref{sub-pb-radio}, we propose a MARL framework. \fig{fig:pna-archi} describes the proposed architecture, made of encoding functions $f(\cdot)$, $g(\cdot)$, and $h(\cdot)$ for features extraction, a message aggregator, an actor-critic module for policy training, and an action selector for task offloading decisions. %In this architecture, 
UEs share the same policy architecture and the same parameters. However, this does not preclude UEs from taking different decisions, due to their different environment observations in the same slot. In contrast, sharing parameters limits complexity (as there is only one common policy to learn), enables efficient training, and helps to improve convergence. Inspired by our previous work \cite{sana2020UA}, let $\obsRadio{k}$ denote the set of ``radio observations'' of UE $k$:
\vspace{-0.1cm}
\begin{align}
    \begin{split}
        \obsRadio{k} {}={}& \bigg\{a_k(t-1), R_{k,a_k(t-1)}, R(t-1), \\
        &{} \mathrm{ACK}_k, \left\{\mathrm{RSS}_{k,n}\right\}_{n \in \mathcal{A}_k}, \left\{\vartheta_{k,n}\right\}_{n \in \mathcal{A}_k} \bigg\}.
    \end{split}
\end{align}
%\vspace{-0.1cm}
$a_k(t-1)\in\mathcal{A}_k$ denotes UE $k$'s action (i.e., connection request to an AP) at time $t-1$, $R_{k,a_k(t-1)}$ is the perceived rate, $R(t-1)$ the total network sum-rate, and $\mathrm{ACK}_k$ the received connection acknowledgment signal. $\left\{\mathrm{RSS}_{k,n}\right\}_{n \in \mathcal{A}_k}, \left\{\vartheta_{k,n}\right\}_{n \in \mathcal{A}_k}$ indicate the received signal strength and corresponding angles of arrival (AoA) from UE $k$ to AP $n$. %, respectively.
Similarly, %we denote with 
$\obsComp{k}$ represents %the set of 
``MEC observations'', %i.e., observation 
related to task offloading:
\begin{align}
    \begin{split}
        \obsComp{k} {}={}& \bigg\{(x_k, y_k), f_k(t), Q_k^l(t), Q_k^{\rm s}(t), Z_k(t) \bigg\},
    \end{split}
\end{align}
where $Q_k^l(t)$, $Q_k^{\rm s}(t)$, $Z_k(t)$ are the queues defined above, $(x_k, y_k)$ %is the estimated localization of UE $k$
are UE $k$'s geographical coordinates, and $f_k(t)$ its allocated CPU frequency at the ES. In our framework, after observing $\obsRadio{k}$, UE $k$ builds its local state encoding $\mathrmbold{u}_k$, which represents its ``perception'' of the radio environment, using an encoding function $f(\cdot)$, e.g., a neural network. Then, based on %its MEC observations 
%$\obsComp{k}$ and after aggregating messages 
the aggregated MEC observations of its whole neighborhood, it constructs an encoding vector $\mathrmbold{v}_k$, which characterizes its perception of the network from a computation viewpoint. UE $k$ then builds its overall context encoding vector $\mathrmbold{c}_k$ to represent its global understanding of the environment, using an encoding function $h(\cdot)$, e.g., a concatenation operator or a neural network. %Finally,
For each UE, the goal of the MARL framework is to learn an association policy $\pi_{\params}$ %parameterized by 
with learnable parameters $\params$, where $\pi_{\params}(a_k(t)| \mathrmbold{o}_k(t)) = p_{a_k(t), k}$ indicates the probability of taking action $a_k(t)$ after observing $\mathrmbold{o}_k(t) = \{\obsRadio{k}, \obsComp{k}\}$. Note that %the probability vector 
$\mathrmbold{p}_k(t) = [p_{0,k}, \ldots, p_{N,k}] \in [0,1]^{N+1}$, from which the action $a_k(t)$ of the UE will be sampled, is such that $\sum_{n\in\mathcal{A}} p_{n,k} = 1$ and $p_{n,k} = 0$ for all $n \not \in \mathcal{A}_k$.
%This optimization is done end-to-end by using \emph{proximal policy optimization} (PPO) and an actor-critic framework \cite{schulman2017ppo}.

\subsection{Message passing service}
Let $\weights_k, \weights_q, \text{and~} \weights_\nu: \mathbb{R}^6 \times \mathbb{R}^m$ be learnable weights, describing the set of parameters of the encoding function $g(\cdot)$, which we later refer to as the \emph{message generator}. For each UE $l$, let $\mathrmbold{k}_l =  \weights_k^T \obsComp{l}$, $\mathrmbold{q}_l =  \weights_q^T \obsComp{l}$, $\mathrmbold{\nu}_l =  \weights_{\nu}^T \obsComp{l}$, and $\mathrmbold{m}_l = \{ \mathrmbold{q}_l, \mathrmbold{\nu}_l \}$ %, where $\mathrmbold{k}_l$, $\mathrmbold{q}_l$, $\mathrmbold{\nu}_l$ denote 
be the \emph{key}, the \emph{query}, the \emph{value}, and the \emph{message} associated with UE $l$. %, respectively. 
Then, each UE $k$, after aggregating the messages from its neighbors $\mathcal{N}_k$, computes its encoding vector %$\mathrmbold{v}_k$: % as follows:
%\begin{equation}\label{eq:dot-att-value}
%    \mathrmbold{v}_k =  \sum_{l\in\mathcal{N}_k} \alpha_{l,k} \mathrmbold{\nu}_l,
%\end{equation}
$\mathrmbold{v}_k =  \sum_{l\in\mathcal{N}_k} \alpha_{l,k} \mathrmbold{\nu}_l$,
where the score $\alpha_{l,k}$ %is a score, which
represents the interaction between UEs $l$ and $k$ (in achieving the underlying optimization goal). This score is calculated using dot-product attention mechanism \cite{vaswani2017attention}:
%\begin{equation} \label{scoreCompute}
%    \alpha_{l,k} = \softmax\left(\left[\frac{\mathrmbold{q}_l \mathrmbold{k}_k^T}{\sqrt{n}}\right]_{l\in\mathcal{N}_k}\right).
%\end{equation}
$\alpha_{l,k} = \softmax\left(\left[\frac{\mathrmbold{q}_l \mathrmbold{k}_k^T}{\sqrt{m}}\right]_{l\in\mathcal{N}_k}\right)$.
Here, $\softmax(\cdot)$ denotes the normalized exponential function. %It is noteworthy that %equation \eqref{eq:dot-att-value}
Note that computing $\mathrmbold{v}_k$ only involves the queries and the values from others UEs in $\mathcal{N}_k$ and not their keys, which are UE-specific and do not need to be transmitted. %In addition, %it is worth noting that 
Such a message passing service enables the \emph{scalability} and the \emph{transferability} of the learned policy, which is optimized for all possible UE deployments, in sharp contrast with \cite{Bi2020}, which requires fixed UEs. In other words, in our framework, a change in the number or position of UEs in the network %and/or their position 
does not require a new policy training and does not impact the architecture of the policy network. Only the number of exchanged messages varies, depending on the variation of a UE's neighborhood. Both, the input variables and the number of neurons of the encoding functions remain fixed. %a change in the input variables nor in the number of neurons of the encoding functions.
This enables \emph{curriculum learning}, where a policy obtained from e.g.~6 UEs can be leveraged as a starting point to train another policy for $K>6$ UEs. Finally, all %policy network 
the encoding functions, including the message generator, are optimized through end-to-end %learning procedure using
\emph{proximal policy optimization} (PPO) and an actor-critic framework \cite{schulman2017ppo}.

\section{Numerical Results}
\begin{comment}
\begin{table}[!t]
    \centering
    \caption{Simulations parameters \textcolor{red}{todo} \protect\cite{sana2020UA}}
    \label{simu-params}
    \scalebox{0.9}{
    \begin{tabular}{|l||c|c|c|}
       \hline
        & UEs & APs & ES\\
       \hline
       \hline
        Parameters & \multicolumn{3}{c|}{Values} \\
       \hline
        Carrier frequency (GHz) & \multicolumn{2}{c|}{28} &\\
        \hline
        Bandwidth (MHz) & 10 & 200 &\\
        \hline
        Thermal noise (dBm/Hz) & \multicolumn{2}{c|}{-174} &\\
        \hline
        Shadowing variance &\multicolumn{2}{c|}{12 dB}  & \\
       \hline
        $p^{\rm off}$, $p^{\rm on}$, $p^{\rm tx}$ (mW)& 346, 900, 100 & 292, 251, 2200 & $10^4$, $2\times 10^4$, - \\
       \hline
        Antenna gain & \cite{sana2020UA}, (Diagram 2) & & \\ 
       \hline
        Radius, $R_0$ & & 50 m &\\
       \hline
        Back-lobe gain & & -20 dB &\\
       \hline
        Path-loss exponent & \multicolumn{2}{c|}{2.5} & \\
        %\hline
        %\textcolor{red}{Path-loss constant$^{(1)}$} & & \\
       \hline
        Inter-cell distance & & $1.2\times R_0$ &\\
       \hline
    \end{tabular}}
\end{table}

\begin{figure}
    \centering
    \includegraphics[scale=0.8]{images/antennadiag.pdf}
    \caption{Simulated antenna radiation pattern at 28 GHz \cite{mailloux2017phased}}
    \label{fig:antennadiag}
\end{figure}
\end{comment}

In this section, we assess the effectiveness of the proposed framework in a network of 3 APs\footnote{The coverage range is $R_0=50$ m and the inter-cell distance is $1.2\times R_0.$} operating at 28-GHz mmWave frequencies and for $K\in\{6, 9, 12, 15\}$ UEs. We use $p^{\rm u, off} = 0.346$ W, $p^{\rm u, on}=0.9$ W, $p^{\rm a, off} = 0.278$ W, $p^{\rm a, on}=2.2$ W, $p^{\rm s, off} = 10$ W, $p^{\rm s, on}=20$ W. Each UE transmits with power $p^{\rm u, tx}(t) = \minof{p_k^{\rm tg}(t)}{p_{\max}}$ over a bandwidth $B=10$ MHz, where $p_k^{\rm tg}(t)$ is the power to meet a predefined target SNR of $15$ dB and $p_{\max}=0.1$ W. Each slot lasts $10$ ms and we set $\beta=0.1$, $N_n=15$, $\kappa=10^{-27}$, $J_k=10^{-3}$, $S_k=1500$ bits $\forall k$ and $\mathcal{F}=\{0,0.1,\ldots,1\}\times %f_{\max}$, where $f_{\max}=
10^9 \text{~cycle/s}$. UEs' data generation rate follows a Poisson distribution with mean $D_k=50\times S_k$ bits $\forall k$. %\textcolor{blue}{We assume that both TX/RX uses the same simulated antenna radiation pattern of \fig{fig:antennadiag}. The interference in this case, results from the overlapping of different beams serving different UEs.} 
Additional parameters, including pathloss and antenna diagrams, can be found in \cite[Table I]{sana2020UA}. %Table \ref{simu-params} summarizes the simulations parameters. 
In our setup, all encoding functions are composed of one multi-layer perceptron (MLPs) of $m=128$ neurons with a rectifier linear unit (ReLu) activation. Both the actor and the critic module comprise $2m$ neurons and we empirically set the learning rate to $10^{-4}$, $\epsilon_1=10$ and $\epsilon_2=0$. To foster the learned policy and enable better generalization, during training, we consider random CPU scheduling\footnote{We randomly select $f_c(t) \in \mathcal{F}$ and allocate $\omega_k f_c(t)$ to each UE $k$ such that $\sum_k \omega_k = 1$, where $\{\omega_k\}_{k}$ follow a symmetric Dirichlet distribution.}. This is possible since \eqref{sub-pb-computation} and \eqref{sub-pb-radio} are completely decoupled, therefore, the policy learned to solve \eqref{sub-pb-radio} must be independent of the ES frequency allocation. We compare our solution, labeled L2OFF (Learning to Offload) in \figs{fig:perfEnergy}{fig:perfComparison}, to two benchmarks:
\begin{itemize}
    \item Exhaustive search: at each slot, we perform an exhaustive search over all possible solutions of \eqref{sub-pb-radio}.
    
    \item Max-SNR: %here, a 
    each UE is associated with a Bernoulli random variable with probability $p$ of being in active state (which models %corresponding to 
    the average duty cycle of UEs). Then, at each $t$, an active UE gets associated with the AP providing the maximum signal-to-noise ratio (SNR).
\end{itemize}
All results are averaged over 200 random deployments of UEs.
\begin{figure}
    \centering
    \includegraphics[width=\columnwidth]{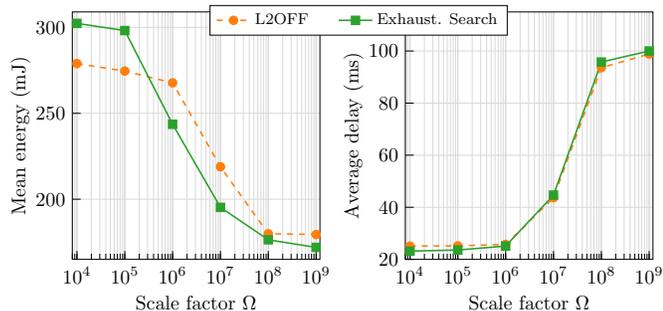}
    \caption{Energy-delay trade-off \wrt $\Omega$ for $K=6$ UEs and for a fixed delay constraint of $100$ ms.}
    \label{fig:perfEnergy}
\end{figure}

\vspace{-0.3cm}
\noindent\textbf{Energy-delay trade-off vs. $\mathrmbold{\Omega}$}. Here, we evaluate the performance of our proposed framework for different values of $\Omega$ (cf. \eqref{drift_plus_penalty}), and compare the results to the performance obtained via exhaustive search in~\fig{fig:perfEnergy}. First, we observe the results suggested by the theory: when %by letting 
$\Omega$ increases, optimally solving \eqref{sub-pb-computation} and \eqref{sub-pb-radio} leads to a lower energy consumption. Meanwhile, the average delay increases and caps to 100ms, which is the fixed delay constraint \eqref{eq:C1}. Interestingly, the proposed scheme exhibits performance close to exhaustive search approach (for $\Omega=10^9$), reaching up to $96.5\%$ of its performance, for the same delay.

\vspace{0.1cm}
\noindent
\textbf{Performance Comparison}. To fairly compare the proposed framework with the heuristic based on Max-SNR, we first determine exhaustively the optimal lowest duty cycle that enables the Max-SNR algorithm to guarantee an average delay constraint of $100$ ms. Then, comparison is made for the same delay in \fig{fig:perfComparison}. %\footnote{For complexity reasons, results for $K\in\{12,15\}$ UEs cannot be obtained for the exhaustive search.}. %We observe that our framework can achieve up to $96.5\%$ of the optimal solution obtained through exhaustive search. 
We can notice how, even by optimally computing the duty cycle for the Max-SNR algorithm, our solution still outperforms, reducing the energy by $10\%$ for 15 UEs compared to Max-SNR solution, as we consider interference, and intelligently orchestrate UEs. Moreover, under the same delay constraint, with our strategy, the network consumes $246$ mJ in average for $15$ UEs, whereas for the same energy consumption, the Max-SNR can only serve $12$ UEs.
\begin{comment}
\begin{figure}
    \centering
    \includegraphics[width=\columnwidth]{images/perf-duty-cycle.pdf}
    \caption{Performance comparison, $\Omega$}
    \label{fig:perfDutyCycle}
\end{figure}
\end{comment}

%\vspace{-0.1cm}
\section{Conclusion}
%\vspace{-0.1cm}
In this work, we proposed %present
a novel approach for delay constrained energy-efficient computation offloading services in dense mmWave networks impaired by interference. We first formulated the computation offloading as a long-term optimization.
%the problem in terms of long-term energy minimization and duty cycle optimization,
%a long-term computation offloading optimization problem, 
%Starting from a long-term optimization problem, thanks to Lyapunov optimization, we proposed a low-complexity solution based on multi-agent reinforcement learning to solve the UE-AP association, and an efficient algorithm to solve the CPU scheduling at the edge server. 
Then, we applied Lyapunov optimization tools to split the problem into a CPU scheduling problem and a UE-AP association problem. While the first one is easily solvable via an efficient iterative algorithm, we solved the second one thanks to multi-agent reinforcement learning with a \emph{distributed and transferable} policy.
The proposed solution %can
reaches up to $96.5\%$ of the optimal solution obtained via exhaustive search and can reduce energy consumption up to $10\%$ compared to a heuristic approach based on SNR maximization. %With respect to the Max-SNR heuristic, our method is also capable of serving more UEs with the same energy consumption and average delay.

%\vspace{-0.1cm}
%\section*{Appendix - Derivation of %\eqref{eq:upperbound_dpp}}
%%\label{appendix}
%%\noindent
%%\begin{proof}[Proof Eq. \eqref{eq:upperbound_dpp}]
%Note that 
%%For a generic virtual queue $X(t)$ evolving as $ X(t+1)=\max(0,\ X(t)+y(t+1)-\bar{y})$, we have from
%\cite[p. 59]{Neely10}:
%%Notice that, from \cite[p. 59]{Neely10} we have:
%%\begin{align} \label{eq:init_up}
%%    \frac{X_k(t+1)^2-X_k(t)^2}{2}\leq \frac{((t+1)-\bar{y})^2}{2}\\\nonumber
%%    +X(t)y(t+1)-X(t)\bar{y}.
%%\end{align}
%\begin{align} \label{eq:init_up}
%    \frac{Z_k(t+1)^2 - Z_k(t)^2}{2} \leq \frac{\parenthesis{Q_k^{\rm tot}(t+1)-Q_k^{\rm avg}}^2}{2}\\
%    + Z_k(t)
%    \parenthesis{Q_k^{\rm tot}(t+1)- Q_k^{\rm avg}}. \nonumber
%\end{align}
%%Then %the upper bound
%\eqref{eq:upperbound_dpp} is obtained by first %i) 
%applying \eqref{eq:init_up} to the \textcolor{red}{$Z_k(t)$'s in the} drift-plus-penalty \eqref{drift_plus_penalty}; then, %ii)
%by applying the following: %inequality:
%$(x+y)^2\leq 2x^2+2y^2, ~\forall~x,y$; finally, by noting that %iii)
%$\max(0,Q-b)+A)^2\leq Q^2+A^2+b^2+2Q(A-b), ~\forall A,b\geq0$.
%\textcolor{red}{Full derivations and expressions of $\zeta$ and $\chi_k(t)$ are omitted due to the lack of space, but follow a similar approach as in \cite{Merluzzi2020URLLC,MerluzziDMEC}.}

%\begin{align*}\label{eq:upper_Z1}
%    \frac{Z_k(t+1)^2 - Z_k(t)^2}{2} \leq \parenthesis{Q_k^l(t+1)}^2+ \parenthesis{Q_k^{\rm s}(t+1)}^2\\ + \frac{1}{2}\left(Q_k^{\rm avg}\right)^2 
%    + Z_k(t) \parenthesis{Q_k^{\rm tot}(t+1)- Q_k^{\rm avg}}.
%\end{align*}

%\end{proof}

\begin{figure}[!t]
    \centering
    \includegraphics[width=0.97\columnwidth]{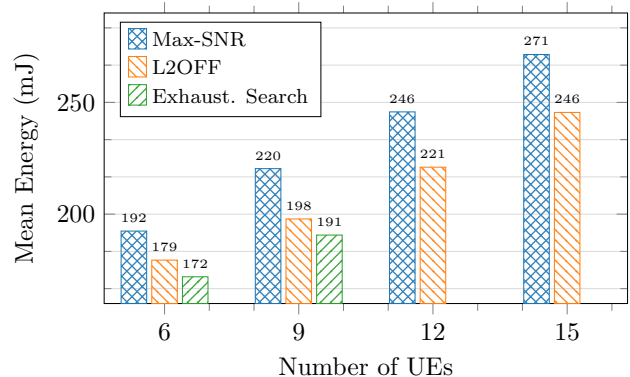}
    \caption{Average energy for a fixed average delay of $100$ ms. Due to complexity, results for $K\in\{12,15\}$ UEs cannot be obtained for the exhaustive search.}
    \label{fig:perfComparison}
\end{figure}

%\vspace{-0.1cm}
\bibliographystyle{IEEEtran}
\bibliography{IEEEabrv, Biblio_Mattia, biblio}
\end{document}